\documentclass[letterpaper,10pt,conference]{ieeeconf}

\IEEEoverridecommandlockouts
\usepackage[utf8]{inputenc}
\usepackage[T1]{fontenc}
\usepackage{microtype}
\usepackage{graphicx}
\usepackage{booktabs}
\usepackage{multirow}
\usepackage{amsmath}
\usepackage{amssymb}
\usepackage{amsfonts}
\usepackage{mathtools}
\usepackage[table]{xcolor}
\usepackage{url}
\usepackage{xspace}
\usepackage{bbm}

\usepackage{caption}
\usepackage{subcaption}
\usepackage{rotating}
\usepackage{pdflscape}

\usepackage{algorithm}
\usepackage[noend]{algpseudocode}

\usepackage[colorlinks=true,linkcolor=blue,citecolor=blue,urlcolor=blue]{hyperref}
\usepackage[capitalize,noabbrev]{cleveref}
\providecommand{\citep}[1]{\cite{#1}}
\providecommand{\citet}[1]{\cite{#1}}

\newtheorem{theorem}{Theorem}[section]
\newtheorem{proposition}[theorem]{Proposition}

\makeatletter
\if@confmode
\def\section{\@startsection{section}{1}{\z@}{1.5ex plus 1.5ex minus 0.5ex}%
{0.7ex plus 1ex minus 0ex}{\normalfont\normalsize\centering\bfseries\scshape}}%
\def\subsection{\@startsection{subsection}{2}{\z@}{1.5ex plus 1.5ex minus 0.5ex}%
{0.7ex plus .5ex minus 0ex}{\normalfont\normalsize\bfseries}}%
\fi
\makeatother

\definecolor{cellgreen}{RGB}{200,235,200}
\definecolor{cellred}{RGB}{255,210,210}

\newcommand{\E}{\mathbb{E}}
\newcommand{\Var}{\operatorname{Var}}

\title{\LARGE \bf
Learning from the Best: Smoothness-Driven Metrics\\ 
for Data Quality in Imitation Learning
}

\author{
Soham Kulkarni$^{1}$, Raayan Dhar$^{1}$, Yuchen Cui$^{1}$%
\thanks{$^{1}$Soham Kulkarni, Raayan Dhar, and Yuchen Cui are with the Department of Computer Science,
University of California, Los Angeles, CA 90095, USA.
{\tt\small \{sohamkulkarni,yuchencui\}@cs.ucla.edu, raayan.dhar@gmail.com}}
}

\begin{document}

\maketitle
\thispagestyle{empty}
\pagestyle{empty}

\begin{abstract}
In behavioral cloning (BC), policy performance is fundamentally limited by demonstration data quality. Real-world datasets contain trajectories of varying quality due to operator skill differences, teleoperation artifacts, and procedural inconsistencies---yet standard BC treats all demonstrations equally. Existing curation methods require costly policy training in the loop or manual annotation, limiting scalability.
We propose \textsc{rinse} (Ranking and INdexing Smooth Examples), a lightweight framework for scoring demonstrations based on trajectory smoothness that is policy-architecture-agnostic and operates on trajectory data alone, with TED additionally using a phase-boundary/contact signal. Grounded in motor control theory---which establishes smoothness as a hallmark of skilled movement---\textsc{rinse} uses two complementary metrics: Spectral Arc Length (SAL), a spectral measure of frequency-domain regularity, and Trajectory-Envelope Distance (TED), a spatial measure of contact-aware geometric deviation. We show that smoothness filtering can reduce the conditional action variance of the retained data distribution, with downstream effects that can be amplified by action chunking and compounding error.
On RoboMimic benchmarks, SAL filtering achieves 16\% higher success using one-sixth of the data. On real-world manipulation, TED filtering achieves 20\% improvement with half the data. As a retrieval-stage filter within STRAP on LIBERO-10, \textsc{rinse} re-ranking improves mean success by 5.6\%. As soft weights in Re-Mix domain reweighting, \textsc{rinse} scores produce domain allocations highly correlated with the learned Re-Mix allocations (Spearman $\rho \ge 0.89$). These results support smoothness as a useful quality signal across filtering, retrieval, and reweighting settings, especially in noisy or heterogeneous data regimes.
\end{abstract}

\section{Introduction}
\label{sec:intro}

Behavioral cloning (BC), training a policy to mimic expert demonstrations via supervised learning, remains the dominant paradigm for teaching robots manipulation skills~\cite{pomerleau1989alvinn,ross2010efficient}. Its simplicity has made it the backbone of modern robot learning systems, from single-task policies~\cite{chi2023diffusionpolicy,zhao2023learning} to generalist models~\cite{octo_2023}. Yet a persistent bottleneck limits its effectiveness: the quality of training data.

Real-world demonstration datasets are inherently heterogeneous. Human operators vary in skill, fatigue, and strategy; teleoperation interfaces introduce device-specific artifacts; and large-scale collection campaigns amplify these inconsistencies~\cite{robomimic2021}. Standard BC treats all demonstrations equally, minimizing a per-step loss under the assumption that every trajectory is equally informative. When noisy or suboptimal demonstrations inflate the \emph{conditional action variance}, the irreducible noise floor of the training objective, the resulting policies learn averaged, hesitant behaviors rather than decisive, expert-like motions.

Recent work has begun to address data curation for robot learning. Learning-based methods score demonstrations using influence functions~\cite{agia2025cupid}, mutual information~\cite{Hejna2025}, or datamodels~\cite{dass2025datamil}, but these require policy training in the loop or expensive computations over the full dataset. Heuristic filters based on task success or trajectory length~\cite{robomimic2021,Ahn2022} fail to distinguish quality differences among successful demonstrations. There is a clear need for metrics that are \emph{lightweight}, \emph{offline}, and usable before any policy is trained, while remaining compatible with different policy architectures.

We propose \textsc{rinse} (Ranking and INdexing Smooth Examples), a framework that filters demonstrations based on trajectory smoothness. Our approach is grounded in motor control theory: skilled human movements are smooth because the motor system minimizes jerk and effort~\cite{flash1985coordination,harris1998signal}. Noisy execution, whether from tremor, hesitation, or device artifacts, manifests as high-frequency energy in the trajectory. Smoothness therefore serves as a principled proxy for demonstration quality, rooted in the biomechanics of skilled behavior rather than arbitrary heuristics.

\textsc{rinse} employs two complementary metrics: \textbf{Spectral Arc Length (SAL)}, a clinically validated smoothness metric from rehabilitation science~\cite{Balasubramanian2012SPARC} that we adapt for robot data curation, and \textbf{Trajectory-Envelope Distance (TED)}, a novel spatial metric that measures geometric deviation from a contact-aware B\'ezier-smoothed reference in Cartesian space. Both metrics are reference-free, require no policy training, and run in sub-second time per demonstration in our tested settings. We provide a theoretical analysis showing how smoothness filtering can reduce the conditional action variance of the retained data distribution, with downstream effects that can be amplified by action chunking and compounding error.

We validate \textsc{rinse} in two settings: (1) as a quality diagnostic that ranks datasets and domains without policy training (RTIS modality benchmark~\cite{li2025how}, Re-Mix~\cite{hejnaremix} domain reweighting), and (2) as a filter for policy learning (RoboMimic~\cite{robomimic2021}, real-world tasks, STRAP~\cite{memmel2024strap} retrieval on LIBERO-10~\cite{liu2024libero}). 
Our results show that \textsc{rinse} scores correlate well with both data collection modality quality~\cite{li2025how} and learned domain allocations from Re-Mix.
Furthermore, SAL filtering achieves a 16\% higher success rate on Transport using 50 out of 300 demonstrations; TED filtering achieves 20\% improvement on real-world Push Block with half the data; and \textsc{rinse} re-ranking improves the success rate of STRAP retrieval by 5.6\% on LIBERO-10. 

Our contributions are: (1)~adaptation of Spectral Arc Length (SAL), a clinically validated smoothness metric from rehabilitation science, to robot demonstration curation, paired with a novel spatial metric, Trajectory-Envelope Distance (TED), that captures contact-aware geometric quality; (2)~a theoretical analysis connecting smoothness filtering to conditional action variance in BC and its downstream impact through compounding error; and (3)~empirical validation across standalone filtering, retrieval re-ranking, and domain reweighting, showing that \textsc{rinse} serves as a versatile plug-in quality signal that often matches or outperforms full-data training with substantially fewer demonstrations.

\begin{figure*}[t]
  \centering
  \includegraphics[width=\linewidth]{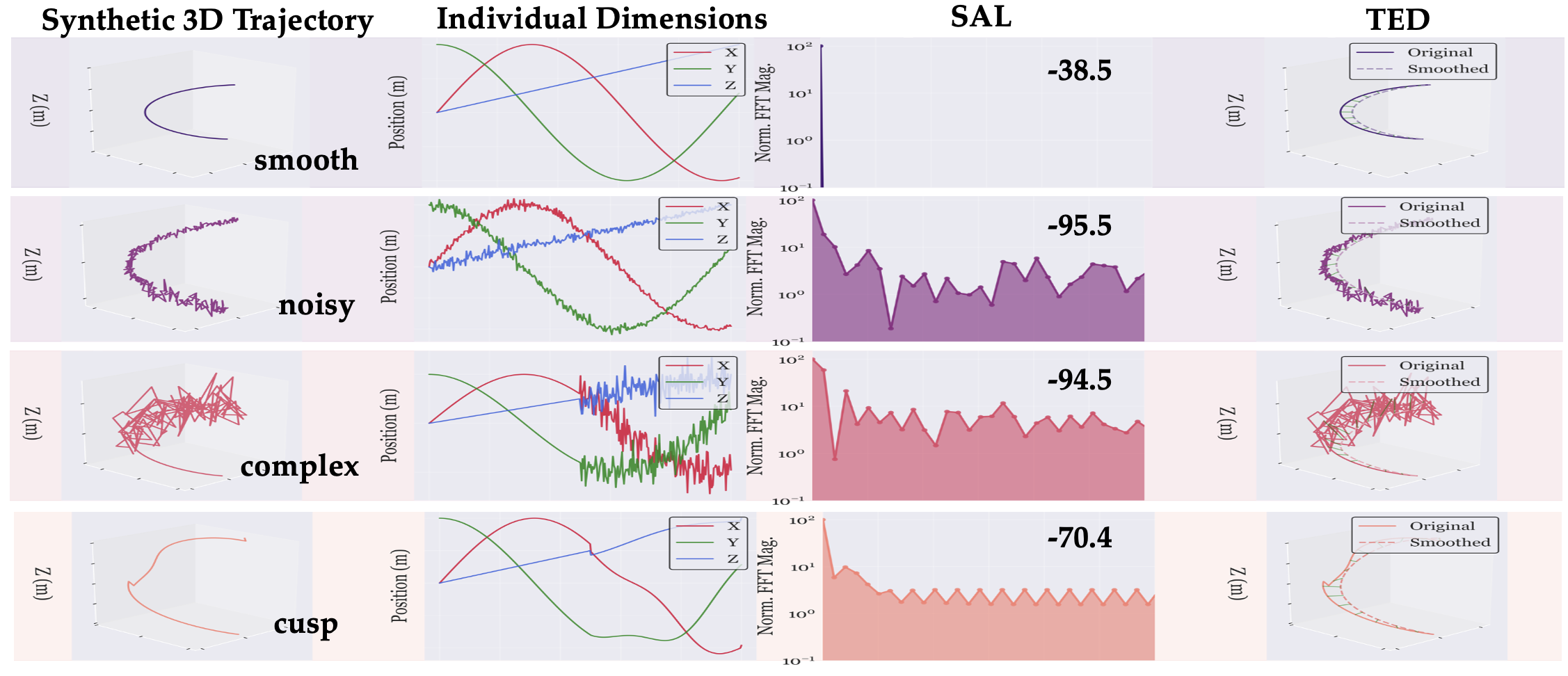}
  \caption{Smoothness metrics applied to synthetic trajectories. Left to right: 3D trajectory, individual axes, SAL spectral analysis, and TED B\'ezier curve fitting. Smooth trajectories (top) have steep spectral decay and small B\'ezier residuals; noisy trajectories (bottom) show flat spectra and large deviations.}
  \label{fig:methods_viz}
\end{figure*}

\section{Related Work}
\label{sec:relatedwork}

Data quality control for robot learning has gained growing attention as demonstration datasets grow in scale. We review key approaches and position \textsc{rinse} within this landscape.

\paragraph{Learning-based curation.}
A growing family of methods assess demonstration quality through learned representations or policy feedback. Influence-function approaches such as CUPID~\cite{agia2025cupid} rank demonstrations by estimated effect on expected return, achieving strong results but requiring policy rollouts and inverse-Hessian computations. DemInf~\cite{Hejna2025} estimates per-trajectory mutual information $I(S;A)$ to rank informativeness, requiring dataset-global statistics and VAE embeddings. DataMIL~\cite{dass2025datamil} trains many models on random subsets to compute data values, capturing interaction effects at high computational cost. Re-Mix~\cite{hejnaremix} performs distributionally robust optimization over domain-level mixing weights. STRAP~\cite{memmel2024strap} retrieves relevant sub-trajectories from a prior pool using subsequence DTW matching. Other methods leverage online rollout signals~\cite{chen2025demoscore}, self-supervised embeddings~\cite{zhang2025scizor}, or importance-weighted retrieval~\cite{du2023behavior}. While these methods can maximize data utility, they introduce computational overhead and often couple quality assessment with policy learning.

\paragraph{Heuristic-based filtering.}
In practice, most datasets are pre-filtered using human-defined rules: task success, trajectory length, or physical checks such as collision counts~\cite{robomimic2021,Ahn2022,shukla2024maniskill}. SayCan~\cite{Ahn2022} collected 276{,}000 autonomous rollouts and retained only 12{,}000 via success filtering and human review, underscoring the data wastage inherent in coarse heuristics. These heuristics capture coarse quality differences but miss subtle execution fidelity: among successful demonstrations, some are smooth and decisive while others are jittery and hesitant. The key challenge is distinguishing expert-like behavior from noisy execution within the set of task-successful trajectories.

\paragraph{Data quality theory for BC.}
Belkhale et al.~\cite{belkhale2024data} formalized data quality for imitation learning through action consistency and state diversity, showing that both axes independently affect downstream policy performance. Lin et al.~\cite{lin2024data} established scaling laws for manipulation, finding that environmental diversity matters more than raw quantity. The classical BC regret bound $J(\hat\pi) - J(\pi^*) \le H^2\varepsilon$~\cite{ross2010efficient,rajaraman2020fundamental} is tight under MSE loss, and recent work by Foster et al.~\cite{foster2024horizon} shows that horizon-independent bounds are achievable under log-loss but remain elusive for MSE objectives used in standard practice. These results motivate quality-focused approaches: since the $H^2$ compounding cannot be eliminated algorithmically for MSE-based BC, reducing the per-step error $\varepsilon$ through better data is the most direct path to improvement.

\paragraph{Smoothness in motor control.}
Trajectory smoothness has a long history as a quality indicator. The minimum-jerk model~\cite{flash1985coordination} predicts that skilled reaching follows fifth-order polynomials with bell-shaped velocity profiles. Signal-dependent noise~\cite{harris1998signal} explains why noisy execution is inherently non-smooth. In rehabilitation, Spectral Arc Length (SPARC)~\cite{Balasubramanian2012SPARC} has been validated as the most reliable smoothness metric with intraclass correlation (ICC) exceeding 0.9~\cite{balasubramanian2015analysis,cornec2024sparc}. Sakr et al.~\cite{sakr2025consistency} showed that smoothness-based consistency metrics predict 70--89\% of task success rates in robot learning. \textsc{rinse} brings these validated metrics to data curation, providing a quality signal grounded in decades of motor control research.

\paragraph{Positioning \textsc{rinse}.}
\textsc{rinse} requires no policy training, rollouts, or dataset-global statistics, and is broadly compatible with different tasks and policy architectures; TED additionally uses a phase-boundary/contact signal. It provides a fast, principled first pass that is complementary to more expensive methods.

\section{RINSE: Ranking and INdexing Smooth Examples}
\label{sec:method}

We present the \textsc{rinse} framework for evaluating and filtering demonstration quality based on trajectory smoothness. We begin with the BC problem setup and a key decomposition of the training loss (\S\ref{sec:setup}), introduce two smoothness metrics (\S\ref{sec:sal-definition}--\ref{sec:ted-main}), and provide theoretical motivation for why smoothness filtering improves BC (\S\ref{sec:theory}).

\subsection{Problem Setup and the Noise Floor}
\label{sec:setup}

We consider a standard imitation learning setting with continuous state space $\mathcal{S} \subset \mathbb{R}^{d_s}$ and action space $\mathcal{A} \subset \mathbb{R}^{d_a}$. A dataset $\mathcal{D} = \{(s_t^{(i)}, a_t^{(i)})\}$ with $t$ denote the timestep inside the $i$-th ($ i \in [0, N)$) demonstration  out of a total of $N$ expert demonstrations. $m = \|\mathcal{D}\|$ is the number of total state-action pairs. $\mathcal{D}$ is used to train a parametric policy $\pi_\theta: \mathcal{S} \to \mathcal{A}$ via empirical risk minimization:
\begin{equation}
    \hat{\theta} = \arg\min_\theta \frac{1}{m}\sum_{(s,a) \in \mathcal{D}} \|\pi_\theta(s) - a\|_2^2.
    \label{eq:erm}
\end{equation}
The population risk is $L(\theta) = \mathbb{E}_{(s,a)\sim p_{\text{demo}}}[\|\pi_\theta(s) - a\|_2^2]$.

\smallskip
\noindent\begin{proposition}[\textbf{Noise floor decomposition}]
\label{prop:noise_floor}
For any policy $\pi_\theta$, the population risk decomposes as:
\begin{equation}
    L(\theta) = \underbrace{\E_s\bigl[\|\pi_\theta(s) - \mu^*(s)\|^2\bigr]}_{\text{policy error}} \;+\; \underbrace{\E_s\bigl[\operatorname{Tr}\bigl(\Var[a \mid s]\bigr)\bigr]}_{\bar\sigma^2\;\text{(noise floor)}},
    \label{eq:noise_floor}
\end{equation}
where $\mu^*(s) = \E[a \mid s]$ is the conditional mean action.
\end{proposition}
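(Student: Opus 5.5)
The plan is the standard bias--variance argument: condition on the state, then use the tower property. Throughout, assume $\E\|a\|^2<\infty$ under $p_{\text{demo}}$, so that $\mu^*(s)=\E[a\mid s]$ and $\Var[a\mid s]$ are well defined. Also assume $\E_s\|\pi_\theta(s)\|^2<\infty$; otherwise both sides are $+\infty$ and the identity holds trivially. Here $\E_s$ denotes expectation under the state marginal of $p_{\text{demo}}$.

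First, add and subtract the conditional mean inside the norm:
\begin{equation*}
\pi_\theta(s)-a = \bigl(\pi_\theta(s)-\mu^*(s)\bigr) + \bigl(\mu^*(s)-a\bigr).
\end{equation*}
Expanding the squared Euclidean norm gives three terms: $\|\pi_\theta(s)-\mu^*(s)\|^2$, then $\|\mu^*(s)-a\|^2$, then the cross term $2\langle \pi_\theta(s)-\mu^*(s),\, \mu^*(s)-a\rangle$.

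Next, take expectations via $\E_{(s,a)}[\cdot]=\E_s\bigl[\E[\cdot\mid s]\bigr]$ and handle each term.

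\emph{First term.} It depends only on $s$, so it directly yields the policy error $\E_s\|\pi_\theta(s)-\mu^*(s)\|^2$.

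\emph{Second term.} Conditionally on $s$,
\begin{equation*}
\E\bigl[\|a-\mu^*(s)\|^2\mid s\bigr]=\sum_{j=1}^{d_a}\Var[a_j\mid s]=\operatorname{Tr}\bigl(\Var[a\mid s]\bigr).
\end{equation*}
Averaging over $s$ gives $\bar\sigma^2$.

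\emph{Cross term.} The vector $\pi_\theta(s)-\mu^*(s)$ is $s$-measurable, so it factors out of the conditional expectation. What remains is $\E[\mu^*(s)-a\mid s]=0$ by definition of $\mu^*$, so the cross term vanishes.

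The only step needing care is justifying that factorization and the integrability of the cross term. Cauchy--Schwarz bounds $\E|\langle \pi_\theta-\mu^*,\mu^*-a\rangle|$ by the square root of the product of the two (finite) second moments already controlled above. This legitimizes the conditional-expectation manipulation, and summing the three terms gives \eqref{eq:noise_floor}.

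Finally, note the consequence the paper uses later. The noise floor does not depend on $\theta$, so $\mu^*$ minimizes $L$, and $\bar\sigma^2$ is an irreducible lower bound on the population risk of any policy.
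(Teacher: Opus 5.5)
Your proposal is correct and follows the paper's own argument: add and subtract $\mu^*(s)$, expand the squared norm, and kill the cross term by conditioning on $s$, leaving the conditional trace-variance term. The integrability assumptions and the Cauchy--Schwarz justification you add are not in the paper's proof, which leaves them implicit, but they do not change the route.
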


\begin{proof}
Expanding $\|\pi_\theta(s) - a\|^2 = \|\pi_\theta(s) - \mu^*(s) + \mu^*(s) - a\|^2$, the cross-term $2\langle\pi_\theta(s) - \mu^*(s),\, \mu^*(s) - a\rangle$ vanishes under $\E[\cdot \mid s]$ because $\E[a \mid s] = \mu^*(s)$, yielding the decomposition by the law of total variance.
\end{proof}

\noindent We write the decomposition in terms of states for simplicity; in practice, the same argument applies to observations or local state neighborhoods, where partial observability and local aliasing induce action inconsistency among nearby inputs.

\noindent The noise floor $\bar\sigma^2$ is the irreducible per-state action inconsistency; it cannot be reduced by \emph{any} policy. When demonstrations come from operators of varying skill, $\bar\sigma^2$ increases because different operators can produce different actions for the same observation or nearby aliased states. The first term, policy error, is what training can minimize.

This decomposition is the theoretical foundation for \textsc{rinse}: by filtering to retain only smooth, high-quality demonstrations, we can reduce the conditional action variance of the retained data distribution, which (i)~lowers the loss floor on that retained subset, (ii)~reduces gradient variance during training (since gradient noise increases with the residual magnitude), and (iii)~produces a cleaner learning signal.

\subsection{Spectral Arc Length (SAL)}
\label{sec:sal-definition}

SAL~\cite{Balasubramanian2012SPARC} quantifies trajectory smoothness in the frequency domain. From each demonstration, we extract a scalar speed signal $v[0{:}T{-}1]$ (e.g., the norm of end-effector velocity, sampled at interval $\Delta t$). Let $V_k = \hat{v}(f_k)$ be its one-sided Fast Fourier Transform (FFT) at frequencies $f_k = k/(T\Delta t)$, with log-amplitude $L_k = \log(|V_k| + \varepsilon)$. The spectral arc length is:
\begin{equation}
    \mathrm{SAL}(v) := -\sum_{k=1}^{T/2} \sqrt{(f_k - f_{k-1})^2 + (L_k - L_{k-1})^2}.
    \label{eq:sal}
\end{equation}
A less-negative SAL score (closer to zero) indicates smoother motion. Each summand is the Euclidean line element along the log-spectrum curve; a short total arc length forces the spectrum to decay steeply, penalizing high-frequency energy. SAL is scale-invariant and computable in $O(T\log T)$ (see Figure~\ref{fig:methods_viz}).

\subsection{Trajectory-Envelope Distance (TED)}
\label{sec:ted-main}

SAL captures global frequency characteristics but ignores Cartesian geometry and the local structure of contact events. We introduce the \emph{Trajectory-Envelope Distance} (TED), a self-similarity measure quantifying how much distortion is needed to deform a raw demonstration into its smoothest feasible version while respecting contact constraints.

\paragraph{Contact-aware partition.}
We consider the proprioceptive end-effector states of a demonstration trajectory $\tau = \{(x_t, R_t)\}_{t=0}^{T}$ with positions $x_t \in \mathbb{R}^3$ and orientations $R_t \in \mathrm{SO}(3)$ is partitioned by gripper contact. Each maximal sub-interval of constant contact status is split into \emph{dense} boundary regions (the first and last $r$ fraction) and a \emph{sparse} interior, with separate smoothing hyperparameters applied to the boundary and interior portions.

\paragraph{B\'ezier envelope.}
For each region, we fit a low-control-point B\'ezier curve using a greedy refinement procedure (see Algorithm~\ref{alg:ted} and \hyperref[appendix:ted]{Appendix~b}) that iteratively inserts control points at locations of maximum residual, keeps the best-scoring envelope under an area-plus-corridor score, and falls back to corridor-based subdivision when needed. B\'ezier curves of degree $K \ll T$ cannot represent high-frequency oscillations by construction, making the residual a natural measure of non-smooth content. Boundary-region blending can optionally anchor dense regions more closely to the raw trajectory. Orientations are smoothed in rotation-vector form using sliding-window geodesic averaging (Karcher mean) or related interpolation variants.

\paragraph{Normalized DTW distance.}
After concatenating position $x_t \in \mathbb{R}^3$ and rotation-vector orientation $r_t \in \mathbb{R}^3$ into $z_t = [x_t^\top, r_t^\top]^\top \in \mathbb{R}^6$, where $r_t$ parameterizes $R_t$, the per-pair cost
$
  d(z_t, \tilde{z}_s) = ({\|x_t - \tilde{x}_s\|^2 + w_{\mathrm{ori}}^2\,\theta(R_t, \tilde{R}_s)^2})^{1/2}
$
is fed to FastDTW with alignment path $P$. The TED score is:
\begin{equation}
    d_{\mathrm{TED}}(\tau) = \frac{1}{|P|}\sum_{(t,s) \in P} d(z_t, \tilde{z}_s).
    \label{eq:ted}
\end{equation}
Lower TED indicates higher intrinsic smoothness. The B\'ezier envelope acts as an implicit low-pass filter: since low-degree polynomial curves cannot represent high-frequency oscillations, the residual between the raw trajectory and its envelope is dominated by high-frequency content. Consequently, a small TED score indicates that the trajectory already has limited high-frequency energy, consistent with a steep spectral decay (large SAL). TED is computed in $O(KT)$ (FastDTW is linear in $T$), typically tens of milliseconds per trajectory. Critically, TED's contact-aware partition avoids penalizing natural discontinuities at contact transitions, a key advantage over global metrics for manipulation tasks.

\subsection{Theoretical Motivation}
\label{sec:theory}

We now trace the theoretical case for smoothness-based filtering, building on established results in motor control and learning theory. Rather than attempting to prove that trajectory smoothness directly controls the policy's function class (a connection that requires strong assumptions on the dynamics mapping~\cite{ross2010efficient}), we identify a cleaner mechanism through the noise floor.

\paragraph{Smoothness indicates skilled execution.}
The minimum-jerk model~\cite{flash1985coordination} predicts that skilled reaching movements follow fifth-order polynomial trajectories minimizing $\int\|\dddot{x}(t)\|^2 dt$, producing bell-shaped velocity profiles. Signal-dependent noise theory~\cite{harris1998signal} establishes that motor noise scales with control signal magnitude: suboptimal controllers that use larger corrective signals experience proportionally more noise, producing non-smooth trajectories with excess high-frequency energy. These results imply that smoothness is \emph{constitutive} of skilled behavior, not merely correlated with it. SAL and TED directly measure this property.

\paragraph{Filtering can reduce conditional action variance.}
Consider a dataset with fraction $\alpha$ of clean demonstrations (within-group variance $\sigma_c^2$) and $1-\alpha$ noisy ones ($\sigma_n^2 \gg \sigma_c^2$). When both groups target the same conditional mean action $\mu^*(s)$, the noise floor decomposes as:
\begin{equation}
    \bar\sigma^2 = \alpha\,\sigma_c^2 + (1-\alpha)\,\sigma_n^2.
    \label{eq:sigma_mixture}
\end{equation}
In practice, noisy demonstrators may also differ in their mean actions, contributing an additional between-group variance term. In either case, filtering to retain the clean subset can reduce the conditional action variance of the retained data distribution, at the cost of reducing sample size from $N$ to $\alpha N$. This trade-off is beneficial when the noise reduction outweighs the increased estimation error. This condition holds when the noise gap is large relative to the policy complexity, the typical regime of heterogeneous teleoperation data.
More precisely, this benefit is with respect to the retained distribution
$p_{\mathrm{keep}}$: lowering conditional action variance improves the learning
signal on the retained subset itself. Improvements on the target task further
require that filtering preserve sufficient coverage of task-relevant states or
observations; otherwise reduced variance may be offset by reduced support.

\paragraph{Compounding error amplification.}
The classical BC regret bound~\cite{ross2010efficient} establishes:
\begin{equation}
    J(\hat\pi) - J(\pi^*) \le H^2\,\varepsilon,
    \label{eq:compounding}
\end{equation}
where $\varepsilon$ is the expected per-step imitation error under the expert's state distribution and $H$ the task horizon. This bound is tight~\cite{rajaraman2020fundamental}: no offline algorithm can improve the $H^2$ scaling for MSE-based BC. Horizon-independent bounds have been shown under log-loss~\cite{foster2024horizon}, but standard BC uses MSE, where the $H^2$ compounding persists. By Proposition~\ref{prop:noise_floor}, the population MSE loss satisfies $L(\hat\theta) \ge \bar\sigma^2$, so the noise floor constrains how small the per-step error can become. The $H^2$ amplification in~(\ref{eq:compounding}) means that even modest reductions in the irreducible per-step error term can translate into improved rollout performance, especially for long-horizon tasks, under standard BC sensitivity to imitation error.
We view~(\ref{eq:compounding}) as a motivating sensitivity bound rather than as
a theorem that, by itself, proves the empirical gains from smoothness
filtering.

\paragraph{Action chunking amplification.}
Modern architectures predict multi-step action chunks of horizon $H_c$ (typically 8--16). The chunk-level noise floor is:
\begin{equation}
    \bar\sigma^2_{\text{chunk}} = \sum_{h=1}^{H_c} \E_s\bigl[\operatorname{Tr}\bigl(\Var[a_h \mid s]\bigr)\bigr] = H_c \cdot \bar\sigma^2_{\text{step}},
    \label{eq:chunk_amplify}
\end{equation}
where the equality holds when per-step noise is approximately stationary and the chunk loss aggregates errors across the prediction horizon. This suggests that chunked predictors can be more sensitive to inconsistent demonstrations than single-step predictors. For Diffusion Policy~\cite{chi2023diffusionpolicy}, which predicts action sequences rather than single actions, reducing disagreement in the demonstration data can therefore be especially beneficial. Although the diffusion denoising process learns to recover action trajectories from noise, the data-level noise floor $\bar\sigma^2$ persists as irreducible conditional variance, since the score network can only denoise toward the conditional mean when demonstrations disagree.

\paragraph{Quality vs.\ quantity.}
The interplay between noise reduction and sample loss creates a non-trivial optimum. If $k$ of $N$ demonstrations are retained, the expected error approximately satisfies:
\begin{equation}
    \mathcal{E}(k) \approx \underbrace{\bar\sigma^2(k)}_{\text{decreasing}} + \underbrace{\frac{C \cdot d}{kT}}_{\text{increasing}},
    \label{eq:quality_quantity}
\end{equation}
where $\bar\sigma^2(k)$ decreases as noisier demonstrations are removed. The second term is the standard statistical estimation error: for a policy class of effective complexity $d$ fit to $n = kT$ state-action pairs, excess risk scales as $O(d/n)$~\cite{ross2010efficient}. An interior optimum $k^*$ exists when $\bar\sigma^2$ decreases faster than estimation error increases, precisely the regime where \textsc{rinse} operates, as confirmed by our experiments.

\paragraph{SAL and TED as complementary quality measures.}
SAL captures global spectral regularity, excelling at detecting high-frequency jitter in free-space motions. TED captures phase-aware geometric coherence, excelling at detecting spatial artifacts in contact-rich tasks. While raw jerk ($\int\|\dddot{x}\|^2 dt$) is the natural smoothness proxy from motor control theory, it penalizes contact transitions indiscriminately and is sensitive to measurement noise and trajectory duration; SAL's spectral formulation is ${\sim}10\times$ more robust to noise than log dimensionless jerk~\cite{Balasubramanian2012SPARC}, and TED's phase-aware segmentation avoids penalizing contact discontinuities. By segmenting at contact boundaries and measuring smoothness \emph{within} each phase, TED provides a physically meaningful quality signal even for multi-phase tasks.

\section{Experiments}
\label{sec:experiments}

We evaluate \textsc{rinse} in two complementary settings: (1)~as a quality diagnostic that ranks datasets and domains without any policy training (\S\ref{sec:exp-modality}--\ref{sec:exp-remix}), and (2)~as a filter that selects demonstrations for improved policy learning (\S\ref{sec:exp-robomimic}--\ref{sec:exp-libero}).

\subsection{Quality Diagnostic: Data Collection Modality}
\label{sec:exp-modality}

We apply \textsc{rinse} to the RTIS benchmark~\cite{li2025how}, which provides five manipulation tasks under three modalities: kinesthetic teaching, VR teleoperation, and spacemouse teleoperation.

\begin{figure*}[t]
  \centering
  \includegraphics[width=\linewidth]{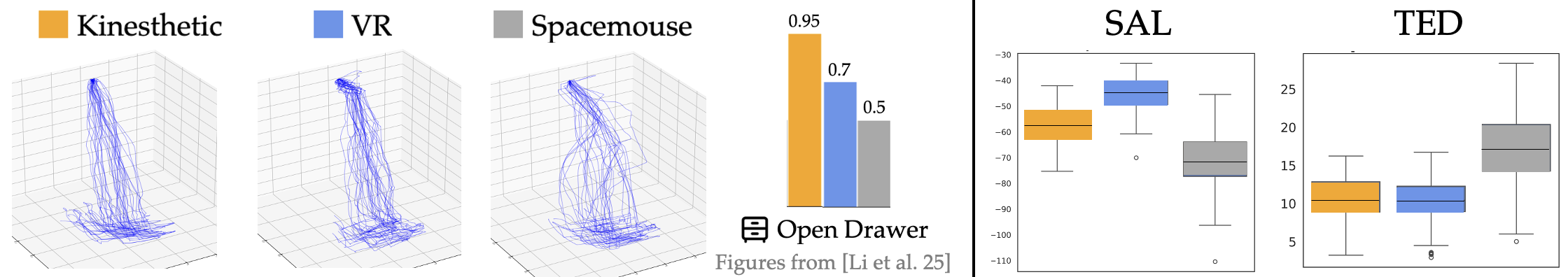}
  \caption{SAL (higher = smoother) and TED (lower = smoother) for Open Drawer from~\cite{li2025how}. Scores correlate with policy success across modalities.}
  \label{fig:rtis}
\end{figure*}

Both metrics consistently rank kinesthetic $>$ VR $>$ spacemouse (\cref{fig:rtis}; per-variant breakdown in \cref{fig:sal-violin}), matching the downstream policy success ordering reported in~\cite{li2025how}. Kinesthetic teaching produces the smoothest trajectories; rate-controlled spacemouse introduces up to $4\times$ larger TED scores. This validates \textsc{rinse} as a diagnostic tool for assessing dataset quality \emph{before} any policy is trained.

\subsection{Quality Diagnostic: Re-Mix Domain Reweighting}
\label{sec:exp-remix}

We apply \textsc{rinse} as soft quality weights within Re-Mix~\cite{hejnaremix}, which performs Group-DRO-style exponentiated updates on domain sampling weights $\alpha_i$ using per-domain excess losses. We modify the estimator by weighting each trajectory's contribution by $w(\tau) = \exp(-\lambda\, q(\tau))$, where $q(\tau)$ is the normalized TED or SAL badness score:
\begin{equation}
    \Delta L_i^{(w)}(\theta) = \frac{\sum_{\tau \in D_i} w(\tau)\,\Delta\ell(\tau;\theta)}{\sum_{\tau \in D_i} w(\tau)}.
    \label{eq:remix_weighted}
\end{equation}
We train on 7 Open X-Embodiment domains for 100k steps (details in \hyperref[sec:remix-details]{Appendix~f}). \Cref{tab:remix} reports final $\alpha$ values after domain allocation under both z-score and rank normalization.

\begin{table}[t]
  \centering
  \footnotesize
  \caption{Re-Mix $\alpha$ values (\%) after domain allocation (mean over 3 seeds). Top: z-score normalization; bottom: rank normalization. \colorbox{cellgreen}{\strut Green}: $>$25\% relative increase from uniform; \colorbox{cellred}{\strut red}: $>$25\% relative decrease.}
  \label{tab:remix}
  \vspace{.3em}
  \setlength{\tabcolsep}{2pt}
  \begin{tabular}{lcccc}
    \toprule
    \textbf{Domain} & \textbf{Uniform} & \textbf{Re-Mix} & \textbf{Re-Mix+TED} & \textbf{Re-Mix+SAL}\\
    \midrule
    \multicolumn{5}{l}{\textit{Z-score normalization} ($\rho \in [0.893, 0.964]$, cos $\in [0.989, 0.999]$)}\\
    \midrule
    autolab\_ur5     & 9.3 & \cellcolor{cellred}3.1 & \cellcolor{cellred}5.8 & \cellcolor{cellred}3.1\\
    cable\_routing   & 3.9 & \cellcolor{cellred}0.7 & \cellcolor{cellred}0.9 & \cellcolor{cellred}0.9\\
    jaco\_play       & 7.4 & \cellcolor{cellred}1.4 & \cellcolor{cellred}2.0 & \cellcolor{cellred}1.7\\
    roboturk         & 17.9 & \cellcolor{cellred}3.9 & \cellcolor{cellred}4.8 & \cellcolor{cellred}4.4\\
    taco\_play       & 22.6 & \cellcolor{cellred}5.0 & \cellcolor{cellred}11.4 & \cellcolor{cellred}7.7\\
    toto             & 31.5 & \cellcolor{cellgreen}\textbf{82.6} & \cellcolor{cellgreen}66.5 & \cellcolor{cellgreen}77.6\\
    viola            & 7.4 & \cellcolor{cellred}3.2 & 8.6 & \cellcolor{cellred}4.5\\
    \midrule
    \multicolumn{5}{l}{\textit{Rank normalization} ($\rho = 1.000$, cos $\in [0.997, 1.000]$)}\\
    \midrule
    autolab\_ur5     & 9.3 & \cellcolor{cellred}1.8 & \cellcolor{cellred}3.3 & \cellcolor{cellred}1.6\\
    cable\_routing   & 3.9 & \cellcolor{cellred}0.4 & \cellcolor{cellred}1.0 & \cellcolor{cellred}0.5\\
    jaco\_play       & 7.4 & \cellcolor{cellred}1.0 & \cellcolor{cellred}2.0 & \cellcolor{cellred}1.2\\
    roboturk         & 17.9 & \cellcolor{cellred}2.3 & \cellcolor{cellred}4.7 & \cellcolor{cellred}2.5\\
    taco\_play       & 22.6 & \cellcolor{cellred}3.1 & \cellcolor{cellred}8.1 & \cellcolor{cellred}4.0\\
    toto             & 31.5 & \cellcolor{cellgreen}\textbf{90.5} & \cellcolor{cellgreen}79.2 & \cellcolor{cellgreen}89.1\\
    viola            & 7.4 & \cellcolor{cellred}0.8 & \cellcolor{cellred}1.7 & \cellcolor{cellred}1.0\\
    \bottomrule
  \end{tabular}
\end{table}

\textsc{rinse}-weighted allocations are highly correlated with Re-Mix base allocations (Spearman $\rho \ge 0.89$, cosine $\ge 0.989$). Since Re-Mix allocations are validated to improve downstream policy performance~\cite{hejnaremix}, this indicates that \textsc{rinse} scores, computed offline without any policy training, recover a domain-quality signal known to matter for learning. All variants concentrate weight on the \emph{toto} domain, which Re-Mix identifies as having the highest marginal training value.

\subsection{Filtering: RoboMimic Benchmarks}
\label{sec:exp-robomimic}

We apply smoothness-based filtering to three RoboMimic benchmarks~\cite{robomimic2021} and train Diffusion Policy (Transformer backbone)~\cite{chi2023diffusionpolicy} on the filtered subsets. For each task, we select a budget of $K$ demonstrations ranked by TED or SAL, and compare against training on the full dataset. \Cref{tab:rbm-main} reports mean success rates over 3 seeds after $\ge$100k gradient steps. For bimanual Transport, smoothness scores are averaged over both arms.

\begin{table}[t]
  \centering
  \footnotesize
  \caption{RoboMimic filtering results (\% success). $K$ demos selected from $|\mathcal{D}|$ by TED or SAL vs.\ full data. mh = multi-human, ph = proficient-human, b--b = better--better.}
  \label{tab:rbm-main}
  \vspace{.3em}
  \setlength{\tabcolsep}{3.5pt}
  \begin{tabular}{lcccccc}
    \toprule
    \textbf{Task} & \textbf{Src} & $|\mathcal{D}|$ & $K$ &
    \textbf{TED} & \textbf{SAL} & \textbf{Full}\\
    \midrule
    Square       & mh     & 300 & 50  & 75 & 72 & \textbf{76} \\
                 & better & 100 & 50  & \textbf{82} & \textbf{82} & 78 \\
    \midrule
    Tool Hang    & ph     & 200 & 100 & 58 & \textbf{71} & 63 \\
    \midrule
    Transport    & mh     & 300 & 50  & 46 & \textbf{55} & 39 \\
                 & b--b   &  50 & 25  & 36 & \textbf{54} & 42 \\
    \bottomrule
  \end{tabular}
\end{table}

On most tasks, either TED or SAL filtering matches or outperforms the full-data baseline despite using as few as one-sixth of the demonstrations. The gains are most pronounced on Transport, a complex bimanual task with substantial free-space motion where SAL filtering achieves a \textbf{16\% improvement} using 50 of 300 demos. On Tool Hang, SAL with half the demonstrations exceeds full-data performance by \textbf{8\%}. Within the ``better'' labeled subsets, both metrics further discriminate quality: filtered top-50 reach 82\% versus 78\% for the full ``better'' set on Square.

The metrics are complementary: SAL excels on tasks with prominent free-space jitter (Transport), while TED captures spatial artifacts in precision contact tasks (Square), consistent with their design (\S\ref{sec:theory}). Convergence speed also improves with filtering (\hyperref[sec:convergence-notes]{Appendix~d}).

\subsection{Filtering: Real-World Experiments}
\label{sec:exp-realworld}

\begin{figure}[t]
  \centering
  \includegraphics[width=0.85\columnwidth]{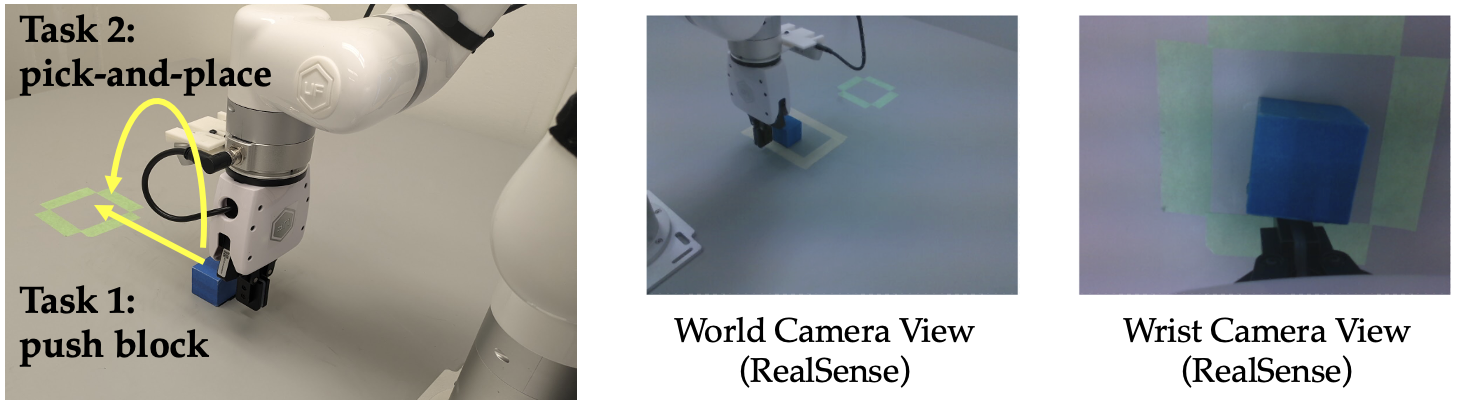}
  \caption{Real-world setup: U-Factory xArm-7 with two RealSense cameras (wrist and side-mounted) for visual observations.}
  \label{fig:real_world_experiments}
\end{figure}

We evaluate on a U-Factory xArm 7-DoF robot with two RealSense cameras (\cref{fig:real_world_experiments}). We evaluate two tasks: \textbf{Push Block} and \textbf{Pick and Place}. All policies use Diffusion Policy (1D Conv UNet backbone) evaluated at epoch 50 over 25 randomized trials.

\begin{table}[t]
  \centering
  \footnotesize
  \caption{Real-world filtering results (\% success). Rnd = average of 3 random subsets of equal size (see \hyperref[sec:random-comparison]{Appendix~c}).}
  \label{tab:real-results}
  \vspace{.3em}
  \setlength{\tabcolsep}{3pt}
  \begin{tabular}{lcccccc}
    \toprule
    \textbf{Task} & $|\mathcal{D}|$ & $K$ &
    \textbf{Rnd} & \textbf{TED} & \textbf{SAL} & \textbf{Full}\\
    \midrule
    Push Block  & 200 & 150  & 64 & 80 & 72 & 68 \\
    Push Block  & 200 & 100  & 59 & \textbf{88} & 84 & 68 \\
    Push Block  & 200 &  50  & 52 & 84 & 84 & 68 \\
    \midrule
    Pick\,\&\,Place & 80 & 40  & 55 & \textbf{76} & 68 & 76 \\
    \bottomrule
  \end{tabular}
\end{table}

TED filtering achieves \textbf{20\% higher success} on Push Block with half the data (\cref{tab:real-results}). At every subset size, TED outperforms random subsets by \textbf{16--32 percentage points}; SAL outperforms by \textbf{8--32 pp}, confirming that \textsc{rinse}'s ranking, not subset size, drives improvement. On Pick and Place ($|\mathcal{D}|{=}80$), TED matches full-data performance, consistent with a smaller, relatively clean dataset where filtering has less room to improve. SAL scores also correlate with operator-reported task difficulty (\hyperref[appendix:operator-difficulty]{Appendix~g}).

\subsection{Filtering: Retrieval-Augmented Learning}
\label{sec:exp-libero}

We integrate \textsc{rinse} into the STRAP retrieval framework~\cite{memmel2024strap} on LIBERO-10~\cite{liu2024libero}, a suite of 10 language-conditioned manipulation tasks. STRAP segments target demonstrations into sub-trajectories and retrieves similar segments from a prior pool via subsequence DTW (SDTW) over DINOv2 visual embeddings. Our variant takes the top $R{=}400$ SDTW candidates per query, re-ranks by TED or SAL, and retains the top $K{=}200$ (details in \hyperref[appendix:libero-details]{Appendix~e}). All methods train a language-conditioned BC Transformer; \cref{tab:libero} reports the mean over 3 seeds of each seed's best checkpoint under the common evaluation protocol.

\begin{table}[t]
  \centering
  \scriptsize
  \caption{LIBERO-10 success rates (\%). \textsc{rinse} re-ranking of STRAP-retrieved sub-trajectories improves aggregate performance. Best per-task in \textbf{bold}.}
  \label{tab:libero}
  \vspace{.3em}
  \setlength{\tabcolsep}{1.8pt}
  \begin{tabular}{l*{10}{c}c}
    \toprule
    & \rotatebox{75}{\scriptsize Mug-MW} & \rotatebox{75}{\scriptsize Moka$^2$} & \rotatebox{75}{\scriptsize Soup-S} & \rotatebox{75}{\scriptsize CCB} & \rotatebox{75}{\scriptsize Mug-P} & \rotatebox{75}{\scriptsize Stove-M} & \rotatebox{75}{\scriptsize Bowl-C} & \rotatebox{75}{\scriptsize Soup-C} & \rotatebox{75}{\scriptsize Mug$^2$} & \rotatebox{75}{\scriptsize Book-C} & \textbf{Avg}\\
    \midrule
    STRAP & \textbf{16} & 0 & 40 & 16 & 22 & \textbf{84} & 76 & 24 & 48 & 80 & 40.6\\
    +TED  & 14 & 0 & 32 & \textbf{48} & \textbf{38} & 72 & \textbf{92} & \textbf{34} & 48 & 84 & \textbf{46.2}\\
    +SAL  & 14 & 0 & \textbf{44} & 8 & 34 & 68 & 84 & 40 & \textbf{58} & \textbf{100} & 45.0\\
    \bottomrule
  \end{tabular}
\end{table}

TED+STRAP and SAL+STRAP improve aggregate success by \textbf{5.6\%} and \textbf{4.4\%} respectively over pure STRAP retrieval, although the per-task outcomes remain mixed. Some tasks regress relative to pure SDTW retrieval, while others improve substantially, indicating that smoothness-aware re-ranking is most helpful when the retrieved pool contains quality variation that SDTW alone does not resolve. The metrics are complementary: TED excels on contact-sensitive tasks (Cream-Cheese-Butter: $+$32\%), while SAL excels on tasks with free-space motion (Book-Caddy: $+$20\%). This demonstrates that \textsc{rinse} scores are effective as a secondary ranking signal within retrieval pipelines, not only as a standalone filter.

\section{Conclusion}
\label{sec:conclusion}

We presented \textsc{rinse}, a lightweight framework for evaluating and filtering demonstration quality based on trajectory smoothness. By adapting Spectral Arc Length (SPARC; SAL) from rehabilitation science and introducing TED, both grounded in motor control theory, \textsc{rinse} provides a principled quality signal that is broadly compatible with different policy architectures, with TED additionally using a phase-boundary/contact signal. Our theoretical analysis shows that smoothness filtering can reduce the conditional action variance of the retained data distribution, with downstream effects that can be amplified by action chunking and compounding error. Experiments across RoboMimic, real-world manipulation, LIBERO-10 retrieval augmentation, and Re-Mix domain reweighting show improvements across filtering, retrieval, and reweighting settings, with the largest gains in noisy or heterogeneous data regimes.

\textbf{Limitations.}
Smoothness is necessary but not sufficient for quality: smooth yet suboptimal trajectories may exist. \textsc{rinse} assumes pre-filtering for task success and operates on proprioceptive signals, though it could be combined with state-coverage metrics; all policies in our experiments use image observations, showing proprioceptive filtering complements visual policy learning. TED's phase-aware segmentation mitigates but does not eliminate the risk of penalizing task-necessary abrupt motions. Future work should explore integrating smoothness with diversity-aware metrics and real-time quality feedback during data collection.

\section*{APPENDIX}

\begin{figure*}[t]
  \centering
  \includegraphics[width=\linewidth]{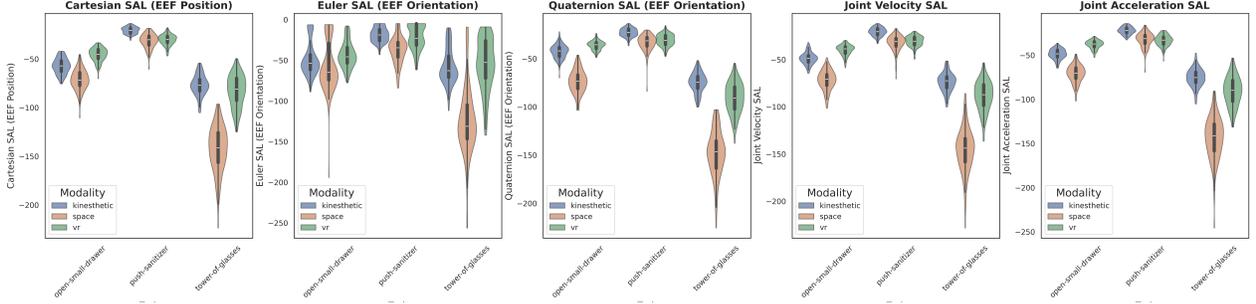}
  \caption{SAL violin plots across five metric variants and three modalities for three tasks from~\cite{li2025how}. Kinesthetic teaching consistently produces higher (smoother) SAL.}
  \label{fig:sal-violin}
\end{figure*}

\paragraph{SAL hyperparameters.}
\label{appendix:sal-hyperparams}
The sampling interval $\Delta t$ matches the control frequency (e.g., $\Delta t = 0.05\,\text{s}$ at 20\,Hz), setting $f_N = 1/(2\Delta t)$. All frequencies from DC to $f_N$ are included without additional pre-filtering. A regularization constant $\varepsilon = 10^{-5}$ prevents numerical issues. SAL requires no task-specific tuning; for high-frequency tasks (e.g., hammering), a higher control rate ensures task-relevant oscillations fall below the Nyquist cutoff.

\paragraph{TED algorithm and hyperparameters.}
\label{appendix:ted}
Algorithm~\ref{alg:ted} describes TED's greedy refine-and-keep-best heuristic with corridor fallback. Each contact phase is split into dense boundary regions and sparse interior ($r = 0.2$). Dense and sparse regions use separate corridor widths and initialization settings: $\varepsilon_{\text{dense}} = 0.10$, $\varepsilon_{\text{sparse}} = 0.05$, $K_{0,\text{dense}} = 10$, $K_{0,\text{sparse}} = 12$, and $M_{\text{ref}} = 20$. Orientations are smoothed via sliding-window geodesic averaging ($w = 4$ timesteps). DTW orientation weight $w_{\mathrm{ori}} = 0.25$. All values are tunable but were set once and used across all experiments without task-specific tuning.

\begin{algorithm}[t]
\caption{Greedy B\'ezier envelope with corridor fallback (TED)}
\label{alg:ted}
\footnotesize
\begin{algorithmic}[1]
\Require Raw positions $\{x_i\}_{i=0}^{n-1}$, corridor width $\epsilon$, area budget $A_{\mathrm{budget}}$, refinement threshold $\tau_{\mathrm{ref}}$, max iterations $M_{\mathrm{ref}}$, initial control-point indices $I_0$, numerical tolerance $\epsilon_{\mathrm{num}}$
\State $c \gets \{x_j \mid j \in I_0\}$
\State $B_{\mathrm{best}} \gets \varnothing$, $s_{\mathrm{best}} \gets +\infty$
\For{$m = 1$ to $M_{\mathrm{ref}}$}
    \State $B \gets \textsc{BezierFit}(c)$
    \State $d_i \gets \|x_i - B(i/(n-1))\|,\;\; i=0,\dots,n-1$
    \State $A_{\mathrm{cur}} \gets \frac{1}{n-1}\sum_{i=0}^{n-1} d_i$
    \State $P_{\mathrm{cur}} \gets \sum_{i=0}^{n-1}\left[\max(0, d_i-\epsilon)\right]^2$
    \State $s_{\mathrm{cur}} \gets A_{\mathrm{cur}} + P_{\mathrm{cur}}$
    \If{$s_{\mathrm{cur}} < s_{\mathrm{best}}$}
        \State $B_{\mathrm{best}} \gets B$, $s_{\mathrm{best}} \gets s_{\mathrm{cur}}$
    \EndIf
    \If{$A_{\mathrm{cur}} \le A_{\mathrm{budget}}$ \textbf{and} $P_{\mathrm{cur}} \le \epsilon_{\mathrm{num}}$}
        \State \textbf{break}
    \EndIf
    \State $j^\star \gets \arg\max_i d_i$
    \If{$d_{j^\star} \le \tau_{\mathrm{ref}}$}
        \State \textbf{break}
    \EndIf
    \State $c \gets c \cup \{x_{j^\star}\}$;\quad sort $c$ in temporal order
\EndFor
\State $B_{\mathrm{corr}} \gets \textsc{CorridorFallback}(\{x_i\}_{i=0}^{n-1}, \epsilon, A_{\mathrm{budget}})$
\State $d^{\mathrm{corr}}_i \gets \|x_i - B_{\mathrm{corr}}(i/(n-1))\|$
\State $s_{\mathrm{corr}} \gets \frac{1}{n-1}\sum_i d^{\mathrm{corr}}_i + \sum_i\left[\max(0, d^{\mathrm{corr}}_i-\epsilon)\right]^2$
\If{$s_{\mathrm{corr}} < s_{\mathrm{best}}$}
    \State \Return $\{B_{\mathrm{corr}}(i/(n-1))\}_{i=0}^{n-1}$
\Else
    \State \Return $\{B_{\mathrm{best}}(i/(n-1))\}_{i=0}^{n-1}$
\EndIf
\end{algorithmic}
\end{algorithm}

\noindent Here $A_{\mathrm{cur}}$ is an interval-normalized residual surrogate (not a physical area), and $\tau_{\mathrm{ref}}$ controls when further control-point insertion stops; in our implementation, we set $\tau_{\mathrm{ref}} = A_{\mathrm{budget}}$.

\paragraph{Random subset comparison.}
\label{sec:random-comparison}
To confirm gains are due to quality selection, we trained Diffusion Policy (1D Conv UNet; batch 64, AdamW lr $10^{-4}$, 100 DDPM / 16 DDIM steps, ResNet-18) on random $K$-sized subsets (3 seeds). \Cref{tab:random-seeds} shows TED and SAL outperform random by \textbf{8--40 pp}.

\begin{table}[h]
  \centering
  \footnotesize
  \caption{Random subsets vs.\ \textsc{rinse} filtering (\% success).}
  \label{tab:random-seeds}
  \vspace{.3em}
  \setlength{\tabcolsep}{3pt}
  \begin{tabular}{lcccccc}
    \toprule
    \textbf{Task} & $K$ & \textbf{Rnd-42} & \textbf{Rnd-371} & \textbf{Rnd-970} & \textbf{TED} & \textbf{SAL}\\
    \midrule
    Push Block & 50  & 52 & 60 & 44 & \textbf{84} & \textbf{84} \\
    Push Block & 100 & 60 & 56 & 60 & \textbf{88} & \textbf{84} \\
    Push Block & 150 & 64 & 68 & 60 & \textbf{80} & 72 \\
    \midrule
    Pick\,\&\,Place & 40 & 56 & 48 & 60 & \textbf{76} & 68 \\
    \bottomrule
  \end{tabular}
\end{table}

\paragraph{Convergence analysis.}
\label{sec:convergence-notes}
\Cref{tab:convergence} summarizes convergence and top-vs-bottom ablations. RoboMimic uses Diffusion Policy (Transformer)~\cite{chi2023diffusionpolicy} (obs.\ 2, action 8, pred.\ 16). Smoothness-ranked subsets achieve \textbf{10--35\%} gains even \emph{within} high-quality labels.

\begin{table}[h]
  \centering
  \footnotesize
  \caption{Convergence and ablation results (\% success).}
  \label{tab:convergence}
  \vspace{.3em}
  \setlength{\tabcolsep}{3pt}
  \begin{tabular}{llcc}
    \toprule
    \textbf{Task (Source)} & \textbf{Metric} & \textbf{Top $K$} & \textbf{Bottom $K$}\\
    \midrule
    Square (better, 100) & TED, $K{=}50$ & \textbf{82\%} & 47\% \\
    Tool Hang (ph, 200) & SAL, $K{=}100$ & \textbf{71\%} & $\le$61\% \\
    Transport (b--b, 50) & SAL, $K{=}25$ & \textbf{54\%} & $\le$44\% \\
    \bottomrule
  \end{tabular}
  \vspace{.3em}

  \raggedright\footnotesize\textit{Convergence:} On Square (mh, 300), the filtered top-50 reaches 75\% in $\approx$40k steps; the full 300-demo model needs $\approx$450k steps ($>$\textbf{10$\times$ speed-up}).
\end{table}

\paragraph{LIBERO-10 training details.}
\label{appendix:libero-details}
STRAP retrieves candidates via SDTW over DINOv2~(ViT-B/14) embeddings (L2). We take top $R{=}400$ per query, re-rank by TED or SAL, retain top $K{=}200$ ($\lfloor K/Q \rfloor$ per query, remainder round-robin). Policy: BC Transformer, context 5, dim 256, 8 layers, 4 heads, GMM head (5 modes), ResNet-18 + SpatialSoftmax. Training: 300 epochs, batch 32, AdamW lr $10^{-4}$, eval every 20 epochs (25 episodes), 5 demos/task.

\paragraph{Re-Mix integration details.}
\label{sec:remix-details}
We weight each trajectory by $w(\tau) = \exp(-\lambda\, q(\tau))$ where $q(\tau)$ is a normalized badness score. Z-score: $q_i^{(z)} = (q_i - \mu_i)/\sigma_i$; rank: $q_i^{(\mathrm{rank})} = (\mathrm{rank}_i - 1)/(N_i - 1)$. Lambda calibrated so $w_{90}/w_{10} = 10$: z-score $\lambda_{\text{TED}} = 0.883$, $\lambda_{\text{SAL}} = 0.965$; rank $\lambda = 2.876$. Training: 7 OXE domains, 100k steps, batch 8, $\eta = 0.2$, $\gamma = 0.05$, 3 seeds. RTIS: scores only (no training).

\paragraph{Operator difficulty analysis.}
\label{appendix:operator-difficulty}
\Cref{fig:operator-difficulty} shows SAL distributions stratified by operator-reported difficulty. Trajectories rated easier consistently have higher SAL, confirming \textsc{rinse} scores align with human-perceived execution quality.

\begin{figure}[h]
  \centering
  \includegraphics[width=0.9\columnwidth]{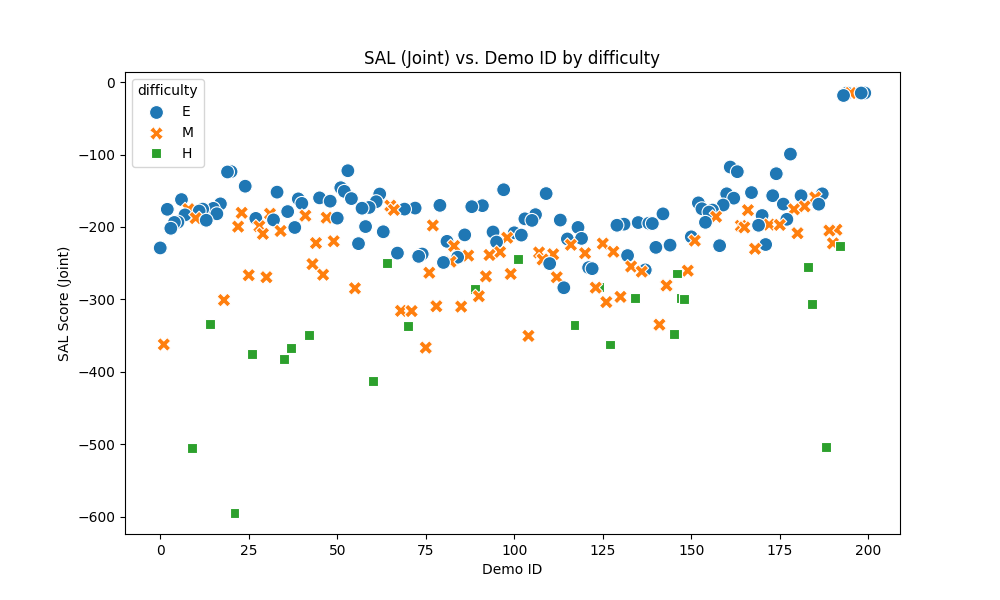}
  \includegraphics[width=0.9\columnwidth]{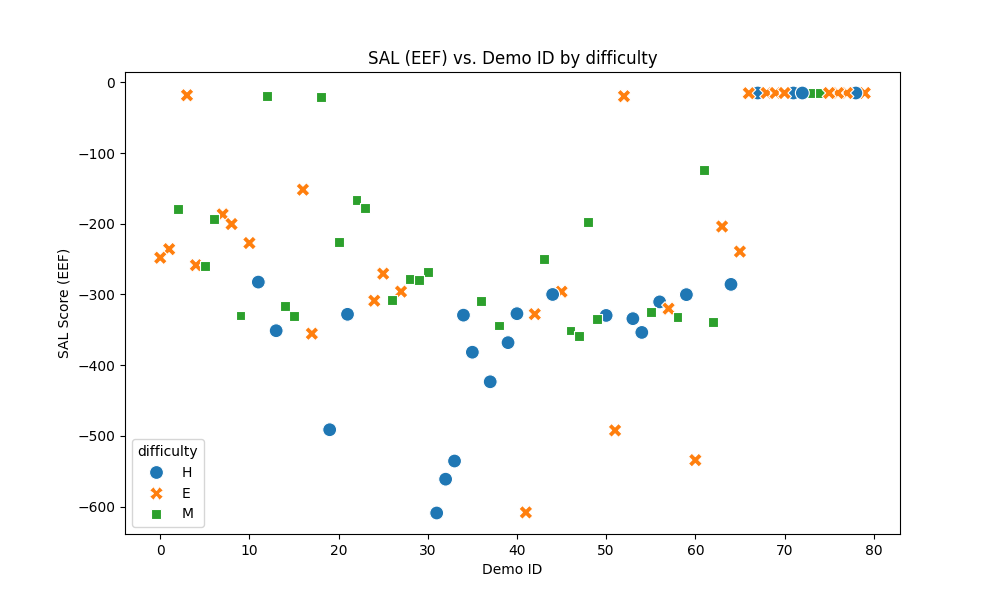}
  \caption{SAL distributions stratified by operator-reported difficulty for Push Block (top) and Pick and Place (bottom). Easier trajectories have higher SAL (smoother).}
  \label{fig:operator-difficulty}
\end{figure}

All experiments use NVIDIA L40S GPUs.

\bibliographystyle{format_files/IEEEtranBST/IEEEtran}
\bibliography{references}

\end{document}